\documentclass{article}
\usepackage{amsfonts,latexsym,amssymb,amsmath,amsthm,epsfig,amscd,bbm}
\usepackage{enumitem}
\usepackage{hyperref}
\usepackage[margin=0.9in]{geometry}

\usepackage[all,cmtip]{xy}
\newcommand{\dc}[1]{*++[o][F.]{#1}}

\setcounter{page}{1}
\parindent=0pt
\setlength{\parskip}{1ex plus 0.5 ex minus 0.2ex}

\newcommand{\zset}{\mathbb Z}

\newcommand{\rset}{\mathbb R}

\newcommand{\hset}{\mathbb H}
\newcommand{\ep}{{\varepsilon}}

\theoremstyle{plain}
\newtheorem{theorem}{Theorem}
\newtheorem{lemma}[theorem]{Lemma}

\theoremstyle{definition}

\newtheorem{defn}[theorem]{Definition}

\newcommand{\One}{\mathbbm{1}}

\newcommand{\suppress}[1]{}
\newcommand{\mm}{{\mathbf{m}}}

\newcommand{\oep}{\overline{\ep}}
\newcommand{\nae}{\text{NAE}}
\DeclareMathOperator{\diag}{diag}
\DeclareMathOperator{\Span}{span}
\DeclareMathOperator{\rowspace}{Rowspace}

\makeatletter
\def\Bigbar#1{\mathrel{\left|\vphantom{#1}\right.\n@space}}

\makeatother

\DeclareMathOperator{\rank}{rank}

\newcommand{\be}{\begin{equation}}
\newcommand{\ee}{\end{equation}}
\newcommand{\bea}{\begin{eqnarray}}
\newcommand{\eea}{\end{eqnarray}}
\newcommand{\bean}{\begin{eqnarray*}}
\newcommand{\eean}{\end{eqnarray*}}

\begin{document}
\title{Hadamard Extensions and the Identification of Mixtures of Product Distributions}
\author{
Spencer L. Gordon\thanks{Engineering and Applied Science, California Institute of Technology, {\tt slgordon@caltech.edu}.} \and
Leonard J. Schulman\thanks{Engineering and Applied Science, California Institute of Technology, {\tt schulman@caltech.edu}. Research supported in part by NSF grant CCF-1909972.}
}
 \maketitle

\begin{abstract} The Hadamard Extension of a matrix is the matrix consisting of all Hadamard products of subsets of its rows. This construction arises in the context of identifying a mixture of product distributions on binary random variables: full column rank of such extensions is a necessary ingredient of identification algorithms. 
We provide several results concerning when a Hadamard Extension has full column rank. 
\end{abstract}

\section{Introduction}\label{sec: intro}
The Hadamard product for row vectors $u=(u_1,\ldots,u_k)$, $v=(v_1,\ldots,v_k)$ is the mapping $\odot: \rset^k \times \rset^k  \to \rset^k$ given by
\begin{align*} u \odot v & := (u_1v_1,\ldots,u_kv_k) \end{align*}
The identity for this product is the all-ones vector $\One$. We associate with vector $v$ the linear
operator $v_\odot = \diag(v)$, a $k \times k$ diagonal matrix, so that 
\[ u \cdot v_\odot =v \odot u .\]

Throughout this paper $\mm$ is a real matrix with row set $[n]:=\{1,\ldots,n\}$ and column set $[k]$; write $\mm_{i}$ for a row and $\mm^{j}$ for a column.

As a matter of notation, for a matrix $Q$ and nonempty sets $R$ of rows and $C$ of columns, let $Q|_R^C$ be the restriction of $Q$ to those columns and rows (with either index omitted if all rows or columns are retained).

\begin{defn} The Hadamard Extension of $\mm$, written 
$\hset(\mm)$, is the $2^n \times k$ matrix 
with rows $\mm_S$ for all $S\subseteq [n]$, where, for $S=\{i_1,\ldots,i_\ell\}$,
$\mm_{S}= \mm_{i_1} \odot \cdots \odot \mm_{i_\ell}$; equivalently $\mm_{S}^j=\prod_{i\in S} \mm_{i}^j$. (In particular $\mm_\emptyset=\One$.)
\end{defn}
 
This construction has arisen recently in learning theory~\cite{CM19,GMRS20b} where it is essential to source identification for a mixture of product distributions on binary random variables. We explain the connection further in Section~\ref{sec:motivation}. Motivated by this application, we are interested in the following two questions:

(1) If $\hset(\mm)$ has full column rank, must there exist a subset $R$ of the rows, of bounded size, such that $\hset(\mm|_R)$ has full column rank?

(2) In each row of $\mm$, assign distinct colors to the distinct real values. Is there a condition on the coloring that ensures $\hset(\mm)$ has full column rank?

In answer to the first question we show in Section~\ref{sec:had}:

\begin{theorem} If $\hset(\mm)$ has full column rank then there is a set $R$ of no more than $k-1$ of the rows of $\mm$, such that $\hset(\mm|_R)$ has full column rank. \label{knockdown} \end{theorem}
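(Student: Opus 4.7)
The plan is a greedy augmentation: build $R$ one index at a time, and show that until $\hset(\mm|_R)$ attains full column rank there is always some additional index whose inclusion strictly increases $\rank\hset(\mm|_R)$. Since $\hset(\mm|_\emptyset)$ is the single row $\One$ (rank $1$), after at most $k-1$ such augmentations we reach rank $k$.

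Write $V_R \subseteq \rset^k$ for the row space of $\hset(\mm|_R)$; equivalently $V_R = \Span\{\mm_S : S \subseteq R\}$. For $i \notin R$, splitting subsets of $R\cup\{i\}$ according to whether they contain $i$ gives
\[ V_{R\cup\{i\}} \;=\; V_R \,+\, \mm_i \odot V_R, \]
so $V_{R\cup\{i\}} = V_R$ holds exactly when $V_R$ is invariant under the diagonal operator $(\mm_i)_\odot$. The greedy step therefore reduces to the following key claim: if $V_R$ is $(\mm_i)_\odot$-invariant for every $i \in [n] \setminus R$, then $V_R = V_{[n]}$.

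To prove the claim, fix $T \subseteq [n]$ and decompose $T = T_1 \sqcup T_2$ with $T_1 := T \cap R$ and $T_2 := T \setminus R$. Then $\mm_{T_1} \in V_R$ since $T_1 \subseteq R$, and because diagonal operators commute,
\[ \mm_T \;=\; \Bigl( \prod_{i \in T_2} (\mm_i)_\odot \Bigr) \mm_{T_1}. \]
Applying the invariance hypothesis once for each $i \in T_2 \subseteq [n]\setminus R$ places $\mm_T$ inside $V_R$. Hence $V_{[n]} \subseteq V_R$. Contrapositively, whenever $V_R \subsetneq V_{[n]}$ some $i \in [n]\setminus R$ must satisfy $V_{R\cup\{i\}} \supsetneq V_R$, i.e.\ the dimension strictly increases, completing the greedy step and hence the proof.

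The main (and essentially only) subtlety is this invariance argument. It is important that invariance is required only at indices outside $R$, so the iteration $\prod_{i \in T_2}(\mm_i)_\odot$ never reuses an operator — which is what makes the subset (rather than multiset) indexing of the Hadamard extension compatible with the argument. If $V_R$ happened to fail to be invariant under some $(\mm_i)_\odot$ with $i \in R$, this would be harmless, because adding $i$ to $R$ is a no-op on $V_R$ anyway; the proof sidesteps this by only needing invariance for the indices that can still be added.
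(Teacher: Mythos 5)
Your proof is correct and takes essentially the same route as the paper: the paper's Lemma~\ref{lem:St} is exactly your greedy step (if the row space grows from $S$ to $S\cup T$, then some single $t\in T$ already grows it), proved there by a minimal-witness argument that is the contrapositive of your iterated-invariance claim that $\mm_T=\bigl(\prod_{i\in T_2}(\mm_i)_\odot\bigr)\mm_{T_1}$ stays in $V_R$. The concluding dimension count from $\dim V_\emptyset=1$ up to $k$ in at most $k-1$ augmentations is identical.
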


Considering the more combinatorial second question, observe that 
 if $\mm$ possesses two identical columns then the same is true of 
$\hset(\mm)$, and so it cannot be full rank. Extending this further, suppose there are three columns $C$ in which only one row $r$ has more than one color. Then 
$\rowspace \hset(\mm|^C)$ is spanned by $\One|^C$ and $r|^C$, so again $\hset(\mm)$ cannot be full rank.
Motivated by these necessary conditions, set:
\begin{defn}
For a matrix $Q$ let $\nae(Q)$ be the set of nonconstant rows of $Q$ (NAE=``not all equal''); let
 $\ep(Q|^C)=|\nae(Q|^C)|-|C|$; and let $\oep(Q)=\min_{C \neq \emptyset} \ep(Q|^C)$. 
If $\oep(Q)\geq -1$ we say $Q$ satisfies the NAE condition. 
\end{defn} 
In answer to the second question we have the following:
\begin{theorem} \label{naethm}
If $\mm$ satisfies the NAE condition then
\begin{enumerate}[label=(\alph*)] \item There is a restriction of $\mm$ to some $k-1$ rows $R$ such that $\oep(\mm|_R)=-1$. \label{parta}
 \item $\hset(\mm)$ is full column rank. 
\label{partb} \end{enumerate}
(As a consequence also $\hset(\mm|_R)$ is full column rank.)
\end{theorem}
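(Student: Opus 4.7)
The plan is to prove Part~(a) first via a descent argument on $|R|$, and then prove Part~(b) by combining Hall's theorem with a rank-one tensor contraction (the same argument, applied to $\mm|_R$, will also yield the parenthetical consequence).

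For Part~(a), first observe that once $R$ satisfies NAE and $|R| = k-1$, taking $C = [k]$ forces $\ep(\mm|_R|^{[k]}) = |\nae(\mm|_R)| - k \leq |R| - k = -1$, so $\oep(\mm|_R) = -1$ automatically. Hence it suffices to produce an NAE subset $R \subseteq [n]$ of size exactly $k-1$. I would start from $R_0 = [n]$ and delete rows one at a time while maintaining NAE. The core lemma is: if $R$ is NAE and $|R| \geq k$, then some $i^* \in R$ can be removed while preserving NAE. Removing $i^*$ preserves NAE iff $i^*$ lies in $\nae(\mm|_R|^C)$ for no \emph{tight} $C$ (one with $\ep(\mm|_R|^C) = -1$). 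The structural fact I would use is that $|\nae(\mm|_R|^C)|$ is submodular in $C$ on intersecting pairs: if $C_1 \cap C_2 \neq \emptyset$, then any row constant on both $C_1$ and $C_2$ is also constant on $C_1 \cup C_2$, and together with modularity of $|C|$ this gives $\ep(C_1 \cup C_2) + \ep(C_1 \cap C_2) \leq \ep(C_1) + \ep(C_2)$. Consequently, if $C_1$ and $C_2$ are tight and intersect, then so are $C_1 \cup C_2$ and $C_1 \cap C_2$, so the maximal tight sets $C_1^*, \ldots, C_t^*$ are pairwise disjoint. If every row of $R$ lay in $\nae(\mm|_R|^{C_\ell^*})$ for some $\ell$, then $|R| \leq \sum_\ell |\nae(\mm|_R|^{C_\ell^*})| = \sum_\ell(|C_\ell^*|-1) \leq k - t \leq k - 1$, contradicting $|R| \geq k$.

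For Part~(b), the observation I would exploit is that the $j$-th column of $\hset(\mm)$ is the rank-one tensor $\bigotimes_{i=1}^n (1,\mm_i^j) \in (\rset^2)^{\otimes n}$: under the identification $\{0,1\}^n \leftrightarrow 2^{[n]}$, the $S$-entry of this tensor is exactly $\prod_{i \in S}\mm_i^j$. Suppose $\sum_j c_j \hset(\mm)^j = 0$; fix $j^* \in [k]$ and attempt to isolate $c_{j^*}$ by contracting with a product dual functional $\bigotimes_i \phi_i$. For each $j \in [k]\setminus\{j^*\}$ I choose a distinct row $i(j) \in T_j := \nae(\mm|^{\{j^*,j\}})$ and set $\phi_{i(j)}(x_0,x_1) = x_1 - \mm_{i(j)}^j x_0$, so that $\phi_{i(j)}$ annihilates $(1,\mm_{i(j)}^j)$ but not $(1,\mm_{i(j)}^{j^*})$; for the remaining rows set $\phi_i = (1,0)$. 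Then $\prod_i \phi_i(1,\mm_i^j) = 0$ for every $j \neq j^*$ and is nonzero for $j = j^*$, so contracting the linear dependence yields $c_{j^*} = 0$.

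The one thing to verify is that the representatives $i(j)$ can be chosen distinct, i.e., that an SDR exists for $\{T_j\}_{j \neq j^*}$. By Hall's theorem this reduces to $|\bigcup_{j \in J} T_j| \geq |J|$ for every $J \subseteq [k]\setminus\{j^*\}$; but $\bigcup_{j \in J} T_j = \nae(\mm|^{J \cup \{j^*\}})$, and the NAE hypothesis gives exactly $|\nae(\mm|^{J \cup \{j^*\}})| \geq |J \cup \{j^*\}| - 1 = |J|$. So the SDR exists, and running this for each $j^*$ gives $c = 0$, proving full column rank. The parenthetical consequence for $\hset(\mm|_R)$ then follows by re-applying (b) to $\mm|_R$, which itself satisfies NAE. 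The most delicate step, I think, is the descent in (a): one must carefully use submodularity on intersecting pairs to conclude that maximal tight column sets are pairwise disjoint, so that the covering estimate $|R| \leq k - t$ is rigorous; everything in (b) is then a clean application of Hall and multilinear algebra.
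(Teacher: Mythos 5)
Your proof is correct, but it takes a genuinely different route from the paper on both parts. For part \ref{parta}, the paper runs a double induction on $k$ and $n$: it picks a \emph{largest} tight column set $S$, uses maximality to get a bipartite expansion property on $([n]-\nae(\mm|^S))\times([k]-S)$, invokes Hall's theorem to build a matching, and then verifies $\oep\geq -1$ after deletion by splitting an arbitrary column set into its parts inside and outside $S$. Your greedy descent with uncrossing is cleaner: the observation that $C\mapsto|\nae(\mm|_R|^C)|$ satisfies the submodular inequality on \emph{intersecting} pairs (it genuinely fails on disjoint pairs, so your restriction is necessary and correctly justified), hence that maximal tight sets are pairwise disjoint, gives the covering bound $|R|\le\sum_\ell(|C_\ell^*|-1)\le k-t$ directly, with no induction on $k$. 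For part \ref{partb}, the paper uses part \ref{parta} plus the invariant-subspace machinery of Section~\ref{sec:had} (Theorem~\ref{invU}: $U$ is $v_{\bar\odot}$-invariant iff it respects $B(v)$) in an induction on $k$. Your argument instead moves Hall's theorem into part \ref{partb}: since $\bigcup_{j\in J}\nae(\mm|^{\{j^*,j\}})=\nae(\mm|^{J\cup\{j^*\}})$, the NAE condition is exactly Hall's condition for an SDR $j\mapsto i(j)$, and the distinctness of the representatives is precisely what lets the vector $\bigodot_{j\neq j^*}\bigl(\mm_{i(j)}-\mm_{i(j)}^j\One\bigr)$ expand as a linear combination of rows $\mm_S$ of $\hset(\mm)$ (equivalently, your dual tensor contraction), producing a nonzero multiple of each standard basis vector $e_{j^*}$ in $\rowspace\hset(\mm)$. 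This is more elementary, is independent of part \ref{parta}, and bypasses Theorem~\ref{invU} entirely; what the paper's subspace-theoretic route buys is the structural Theorem~\ref{invU} itself, which is of independent interest, whereas your route buys modularity and a shorter path to the rank statement.
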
 Apparently the only well-known example of the NAE condition is when $\mm$ contains $k-1$ rows which are identical and whose entries are all distinct. Then the vectors $\mm_\emptyset, \mm_{\{1\}}, \mm_{\{1,2\}},\ldots, \mm_{\{1,\ldots,k-1\}}$ form a nonsingular Vandermonde matrix. This example shows that the bound of $k-1$ in \ref{parta} is best possible.

For another example in which the NAE condition ensures that $\rank \hset(\mm)=k$, take
 the $(k-1)$-row matrix with $\mm_{i}^j=1$ for $i\leq j$ and $\mm_{i}^j=1/2$ for $i>j$.
 Here the NAE condition is only minimally satisfied, in that for every $\ell\leq k$ there are $\ell$ columns $C$ s.t.\ $\ep(\mm|^C)=-1$. 

For $k>3$ the NAE condition is no longer necessary for $\hset(\mm)$ to have full column rank.
E.g., for $k=2^\ell$, the $\ell \times k$ ``Hamming matrix'' $\mm_{i}^j=(-1)^{j_i}$ where $j$ is an $\ell$-bit string $j=(j_1,\ldots,j_\ell)$, forms $\hset(\mm)=$ the Fourier transform for the group $(\zset/2)^\ell$ (often called a Hadamard matrix), which is invertible. Furthermore, almost all (in the sense of Lebesgue measure) $\lceil \lg k \rceil \times k$ matrices $\mm$ form a full-rank $\hset(\mm)$. (This is because $\det \hset(\mm)$ is a polynomial in the entries of $\mm$, and the previous example shows the polynomial is nonzero.)
Despite this observation, the Vandermonde case, in which $k-1$ rows are required, is very typical, as it is what arises in $\hset(\mm)$ for a mixture model of observables $X_i$ that are iid conditional on a hidden variable.

\section{Some Theory for Hadamard Products, and a Proof of Theorem~\ref{knockdown}} \label{sec:had}

For $v \in \rset^k$ and $U$ a subspace, extend the definition $v_\odot$ to
\[ v_\odot (U)= \{u \cdot v_\odot: u\in U\}\]
and introduce the notation
\[ v_{\bar\odot}(U)=\Span\{U \cup v_\odot(U)\}. \]

We want to understand which subspaces $U$
are invariant under $v_{\bar\odot}$. 
Let $v$ have distinct values $\lambda_1>\ldots>\lambda_\ell$ for $\ell\leq k$. 
Let the polynomials $p_{v,i}$ ($i=1,\ldots,\ell$) of degree $\ell-1$ be the Lagrange interpolation polynomials for these values, so $p_{v,i}(\lambda_j)=\delta_{ij}$ (Kronecker delta). Let $B(v)$ denote the partition of $[k]$ into 
blocks $B(v)_{(i)}=\{j: v_j=\lambda_i\}$.
Let $V_{(i)}$ be the space spanned by the elementary basis vectors in $B(v)_{(i)}$, and $P_{(i)}$ the projection onto $V_{(i)}$ w.r.t.\ standard inner product.
We have the matrix equation
 \[ p_{v,i}(v_\odot) =P_{(i)}. \] 
 The collection of all linear combinations of the matrices $P_{(i)}$ is a commutative algebra, the \emph{$B(v)$ projection algebra}, which we denote $A_{B(v)}$. The identity of the algebra is $I=\sum P_{(i)}$.

\begin{defn} A subspace of $\rset^k$ \textit{respects} $B(v)$ if it is spanned by vectors each of which lies in some $V_{(i)}$.
\end{defn}
For $U$ respecting $B(v)$ write $U=\Span(\bigcup U_{(i)})$ for $U_{(i)} \subseteq V_{(i)}$.  
Let $D_{(i)}=(U_{(i)})^\perp \cap V_{(i)}$. 
Then $(U_{(i)})^\perp =D_{(i)} \oplus \bigoplus_{j \neq i} V_{(j)}$. 

\begin{lemma} Subspace $U^\perp$ respects $B(v)$ if $U$ does.\end{lemma}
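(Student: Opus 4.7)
The plan is to exploit that the blocks $V_{(i)}$ are mutually orthogonal with respect to the standard inner product, since they are spanned by disjoint subsets of the standard basis vectors indexed by the blocks of $B(v)$. Together they span all of $\rset^k$, so we have the orthogonal decomposition $\rset^k = \bigoplus_i V_{(i)}$.

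Suppose $U$ respects $B(v)$, so $U = \bigoplus_i U_{(i)}$ with $U_{(i)} \subseteq V_{(i)}$, and this sum is already orthogonal. Given any $w \in \rset^k$, decompose $w = \sum_i w_{(i)}$ with $w_{(i)} \in V_{(i)}$. First I would observe that $w \perp U$ iff $w \perp U_{(i)}$ for every $i$, and that $\langle w, u_{(i)}\rangle = \langle w_{(i)}, u_{(i)}\rangle$ for $u_{(i)} \in U_{(i)}$, since $w_{(j)} \perp V_{(i)}$ when $j \neq i$. Consequently $w \in U^\perp$ iff each $w_{(i)}$ lies in the orthogonal complement of $U_{(i)}$ inside $V_{(i)}$, namely in $D_{(i)}$.

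Thus $U^\perp = \bigoplus_i D_{(i)}$, a sum of subspaces each sitting inside some $V_{(i)}$, so $U^\perp$ is spanned by vectors each lying in some $V_{(i)}$, which is the definition of respecting $B(v)$.

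There is no real obstacle here; the only thing to be careful about is the observation that since $\{V_{(i)}\}$ is an orthogonal direct sum decomposition of the ambient space, taking orthogonal complements commutes with the block decomposition. This is the content lurking behind the displayed identity $(U_{(i)})^\perp = D_{(i)} \oplus \bigoplus_{j\neq i} V_{(j)}$ already noted in the excerpt, which combined across $i$ gives exactly $U^\perp = \bigoplus_i D_{(i)}$.
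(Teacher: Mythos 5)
Your proof is correct and follows essentially the same route as the paper: both arguments reduce to showing $U^\perp = \bigoplus_i D_{(i)}$ using the orthogonality of the blocks $V_{(i)}$, the paper via the identity $(\Span(W\cup W'))^\perp = W^\perp\cap W'^\perp$ applied to the $(U_{(i)})^\perp$, and you via an equivalent component-wise computation with an arbitrary vector $w$. No gap.
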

\begin{proof} In general, $(\Span(W \cup W'))^\perp=W^\perp \cap W'^\perp$. 
So $U^\perp= \bigcap (U_{(i)})^\perp = \bigoplus D_{(i)}$. 
\end{proof}

\begin{lemma} Subspace $U$ respects $B(v)$ iff $U= \bigoplus (P_{(i)} U)$. \label{projs} \end{lemma}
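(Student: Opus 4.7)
The plan is to prove both directions directly from the decomposition $\rset^k = \bigoplus_i V_{(i)}$ and the fact that $P_{(i)}$ is the orthogonal projection to $V_{(i)}$, so that $P_{(i)} P_{(j)} = \delta_{ij} P_{(i)}$ and $\sum_i P_{(i)} = I$.

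For the forward direction, I assume $U$ respects $B(v)$, so by definition $U = \bigoplus_i U_{(i)}$ with $U_{(i)} \subseteq V_{(i)}$. Since $P_{(i)}$ acts as the identity on $V_{(i)}$ and annihilates $V_{(j)}$ for $j\neq i$, I compute
\[ P_{(i)} U = P_{(i)}\Bigl(\sum_j U_{(j)}\Bigr) = U_{(i)}, \]
and so $\bigoplus_i P_{(i)}U = \bigoplus_i U_{(i)} = U$.

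For the converse, suppose $U = \bigoplus_i P_{(i)} U$. Since $P_{(i)} U \subseteq V_{(i)}$, this already exhibits $U$ as a direct sum of subspaces each sitting inside some $V_{(i)}$, which is exactly the definition of respecting $B(v)$: take $U_{(i)} := P_{(i)} U$ and any spanning set for each $U_{(i)}$ is a spanning set of $U$ with every vector lying in some $V_{(i)}$.

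I do not expect a genuine obstacle here; the content is essentially that a subspace is a direct sum along an orthogonal decomposition iff it is carried into itself by the projections of that decomposition. The only point worth flagging is that because $\sum_i P_{(i)} = I$, the inclusion $U \subseteq \sum_i P_{(i)} U$ is automatic and the $V_{(i)}$'s being orthogonal makes the sum $\sum_i P_{(i)} U$ direct without extra argument, so the nontrivial content of the equality $U = \bigoplus_i P_{(i)} U$ is the reverse inclusion $\sum_i P_{(i)} U \subseteq U$, which is what ties it to the respecting property.
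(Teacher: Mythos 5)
Your proof is correct and follows essentially the same route as the paper: the forward direction uses that $P_{(i)}$ is the identity on $V_{(i)}$ and annihilates the other blocks to get $P_{(i)}U = U_{(i)}$, and the converse observes that $U=\bigoplus P_{(i)}U$ with $P_{(i)}U\subseteq V_{(i)}$ is already an explicit witness of the respecting property. No gaps.
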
 \begin{proof} ($\Leftarrow$): Because this gives an explicit representation of $U$ as a direct sum of subspaces each restricted to some $V_{(i)}$. ($\Rightarrow$): By definition $U$ is spanned by some collection of subspaces $V'_{(i)}\subseteq V_{(i)}$; since these subspaces are necessarily orthogonal, $U=\bigoplus V'_{(i)}$. Moreover, since $P_{(i)}$ annihilates $V_{(j)}, j\neq i$, and is the identity on $V_{(i)}$, it follows that each $V'_{(i)}= P_{(i)} U$.
\end{proof}

\begin{theorem} Subspace $U$ is invariant under $v_{\bar \odot}$ iff $U$ respects $B(v)$.
\label{invU} \end{theorem}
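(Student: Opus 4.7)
The plan is to reduce the claim to a standard fact about invariant subspaces of diagonalizable operators. First I would observe that $U$ is invariant under $v_{\bar\odot}$ precisely when $v_\odot(U)\subseteq U$, because $v_{\bar\odot}(U)=\Span\{U\cup v_\odot(U)\}$ already contains $U$, so the only nontrivial content of invariance is that $v_\odot$ maps $U$ into itself. Thus the theorem becomes: $U$ is $v_\odot$-invariant iff $U$ respects $B(v)$.

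For the ($\Leftarrow$) direction, write $U=\bigoplus U_{(i)}$ with $U_{(i)}\subseteq V_{(i)}$. On $V_{(i)}$ the diagonal operator $v_\odot$ acts as the scalar $\lambda_i$, so $v_\odot(U_{(i)})=\lambda_i U_{(i)}\subseteq U_{(i)}$, and taking the sum gives $v_\odot(U)\subseteq U$.

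For the ($\Rightarrow$) direction, the key observation is that invariance under $v_\odot$ implies invariance under every polynomial in $v_\odot$. Applying this to the Lagrange polynomials $p_{v,i}$ yields $P_{(i)} U\subseteq U$ for each $i$, using the identity $p_{v,i}(v_\odot)=P_{(i)}$ established in the preamble. Since $\sum_i P_{(i)}=I$, every $u\in U$ decomposes as $u=\sum_i P_{(i)} u$ with each summand lying in $U\cap V_{(i)}$. Because the $V_{(i)}$ are pairwise orthogonal, the sum $\sum_i P_{(i)} U$ is direct, so $U=\bigoplus_i P_{(i)} U$, and Lemma~\ref{projs} concludes that $U$ respects $B(v)$.

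I do not anticipate a genuine obstacle here: the whole argument is really the standard spectral fact that a subspace is invariant under a diagonalizable operator iff it is a direct sum of its intersections with the eigenspaces, and the preamble has already packaged the projection onto each eigenspace as a polynomial in $v_\odot$. The only point to be slightly careful about is verifying that passing from ``invariant under $v_{\bar\odot}$'' to ``invariant under $v_\odot$'' is really an equivalence, which falls out immediately from the definition.
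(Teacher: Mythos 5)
Your proof is correct and follows essentially the same route as the paper's: reduce invariance under $v_{\bar\odot}$ to invariance under $v_\odot$, use the identity $p_{v,i}(v_\odot)=P_{(i)}$ to get $P_{(i)}U\subseteq U$, and conclude via Lemma~\ref{projs}. The only cosmetic difference is in the ($\Leftarrow$) direction, where the paper detours through $U^\perp$ while you verify the scalar action of $v_\odot$ on each $U_{(i)}$ directly; both amount to the same computation.
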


\begin{proof} 

($\Leftarrow$): 
It suffices to show $U^\perp$ is invariant under $v_{\bar \odot}$. By the previous lemma, it is equivalent to suppose that $U^\perp$ respects $B(v)$. So let $d\in U^\perp$ and write $d=\sum d_{i}, d_i \in D_{(i)}$. Then $v \odot d_i =\lambda_i d_i \in D_{(i)}$. So $v \odot d = \sum v \odot d_i \in \bigoplus D_{(i)} = U^\perp$. 

($\Rightarrow$): If $U=v_{ \bar \odot}(U)$ then these also equal
$v_{ \bar \odot}(v_{ \bar \odot}(U))$, etc., so $U$ is an invariant space of $A_{B(v)}$, meaning, $aU \subseteq U$ for any $a \in A_{B(v)}$. In particular for $a=P_{(i)}$. So $U \supseteq \bigoplus (P_{(i)}U)$. On the other hand, since $\sum P_{(i)} = I$, $U = (\sum P_{(i)})U \subseteq \bigoplus (P_{(i)}U)$. So $U =\bigoplus (P_{(i)}U)$. Now apply Lemma~\ref{projs}.
 \end{proof}

The symbol $\subset$ is reserved for strict inclusion.
 
\begin{lemma} If $S, T \subseteq [n]$ and 
$\rowspace \hset(\mm|_S) \subset \rowspace \hset(\mm|_{S \cup T})$,
then there is a row $t \in T$ such that
$\rowspace \hset(\mm|_S) \subset \rowspace \hset(\mm|_{S \cup \{t\}})$.
\label{lem:St} \end{lemma}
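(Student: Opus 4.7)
The plan is to prove the contrapositive: if no single row $t \in T$ strictly enlarges $\rowspace \hset(\mm|_S)$, then neither does $T$ as a whole. Set $U := \rowspace \hset(\mm|_S) = \Span\{\mm_{S'} : S' \subseteq S\}$.

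The first step is a double equivalence. For a fixed $t \in T$, the hypothesis $\rowspace \hset(\mm|_{S \cup \{t\}}) = U$ says exactly that the ``new'' rows of $\hset(\mm|_{S \cup \{t\}})$, namely $\mm_t \odot \mm_{S'}$ for $S' \subseteq S$, all lie in $U$. Since $(\mm_t)_\odot$ is linear and the $\mm_{S'}$ span $U$, this is equivalent to the invariance $(\mm_t)_\odot U \subseteq U$. Under the contrapositive assumption we therefore get $(\mm_t)_\odot U \subseteq U$ for every single $t \in T$.

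The second step is an easy induction on $|T'|$ showing that $\mm_{S' \cup T'} \in U$ for every $S' \subseteq S$ and every $T' \subseteq T$. The base case $T' = \emptyset$ is the definition of $U$. For the inductive step, pick any $t \in T'$ and factor $\mm_{S' \cup T'} = \mm_t \odot \mm_{S' \cup (T' \setminus \{t\})}$; the inductive hypothesis puts the second factor in $U$, and the first step then places the Hadamard product in $U$ as well. With this claim in hand, every row of $\hset(\mm|_{S \cup T})$ lies in $U$, so $\rowspace \hset(\mm|_{S \cup T}) = U$, contradicting the strict inclusion in the hypothesis.

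The proof is essentially a closure argument and there is no real obstacle. The one conceptual point to get right is the equivalence in the first step: ``adding a single row $t$ does not enlarge $U$'' is the same as ``$U$ is closed under Hadamard multiplication by $\mm_t$'', and a family of such one-step closures bootstraps to closure under arbitrary iterated Hadamard products. Theorem~\ref{invU} is not strictly needed, though its spirit (once $U$ is $v_\odot$-invariant, arbitrary polynomials in $v_\odot$ preserve $U$) is exactly what is being used in miniature.
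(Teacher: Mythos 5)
Your proof is correct and is essentially the paper's argument read contrapositively: the paper chooses a smallest $T'$ with $\mm_{S'}\odot\mm_{T'}\notin\rowspace\hset(\mm|_S)$ and peels off a single element $t$, which is exactly your induction run backwards, using the same factorization $\mm_{S'\cup T'}=\mm_t\odot\mm_{S'\cup(T'\setminus\{t\})}$. The only nit is that this factorization requires $t\notin S'$; the paper handles this by assuming $S$ and $T$ disjoint without loss of generality, and in your setup the case $t\in S'$ is harmless anyway since then $\mm_{S'\cup T'}=\mm_{S'\cup(T'\setminus\{t\})}$ and the inductive step is immediate.
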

\begin{proof} Without loss of generality $S,T$ are disjoint. Let $T'\subseteq T$ be a smallest set s.t.\ $\exists S'\subseteq S$ s.t.\ $\mm_{S'} \odot \mm_{T'} \notin 
\rowspace \hset(\mm_S)$.
Select any $t\in T'$ and write $\mm_{S'} \odot \mm_{T'} = \mm_{S'} \odot \mm_{T'-\{t\}} \odot \mm_t$. By minimality of $T'$, $\mm_{S'} \odot \mm_{T'-\{t\}} \in 
\rowspace \hset(\mm_S)$.
But then $\mm_{S'} \odot \mm_{T'} \in
\rowspace \hset(\mm_{S\cup \{t\}})$, so
$\rowspace \hset(\mm|_S) \subset \rowspace \hset(\mm|_{S \cup \{t\}})$.
\end{proof}

Theorem~\ref{knockdown} is now a consequence of Lemma~\ref{lem:St}. \qed

It follows from 
Theorem~\ref{knockdown} that we can check whether $\rank \hset(\mm)=k$ in time $O(n)^k$ by computing $\rank \hset(\mm|_S)$ for
each $S \in \binom{[n]}{k-1}$.

\section{Combinatorics of the NAE Condition: Proof of Theorem~\ref{naethm}~\ref{parta}} \label{sec:nae}
Recall we are to show:~\ref{naethm}~\ref{parta}: \textit{If $\oep(\mm)\geq -1$ then $\mm$ has a restriction to some $k-1$ rows on which $\oep=-1$. }
\begin{proof} We induct on $k$. The (vacuous) base-case is $k=1$.
For $k>1$, we induct on $n$, with base-case $n=k-1$.

Supposing the Theorem fails for $k$, $k>1$, let 
 $\mm$ be a $k$-column counterexample with least $n$. Necessarily every row is in $\nae(\mm)$, and
 $n>k-1\geq 1$. We will show $\mm$ has a restriction $\mm'$ to 
$n-1$ rows, for which $\oep(\mm')\geq -1$;
this will imply a contradiction because, by minimality of $\mm$, $\mm'$ has a restriction to $k-1$ rows on which $\oep= -1$.

If $\oep(\mm)\geq 0$ then we can remove any single row of $\mm$ and still satisfy $\oep\geq-1$. 

Otherwise, $\oep(\mm)=-1$, so there is a nonempty $S$ such that $|\nae(\mm|^S)|=|S|-1$; choose a largest such $S$. It cannot be that $S=[k]$ (as then $n=k-1$). 
Arrange the rows $\nae(\mm|^S)$ as the bottom $|S|-1$ rows of the matrix. As discussed earlier, for the NAE condition one may 
regard the distinct real values in each row of $\mm$ simply as distinct colors; relabel the colors in each row above $\nae(\mm|^S)$ so the color above $S$ is called ``white.'' (There need be no consistency among the real numbers called white in different rows.) See Figure~\ref{minimalfig}.

\begin{figure}[ht] \centering
\includegraphics[scale=0.75]{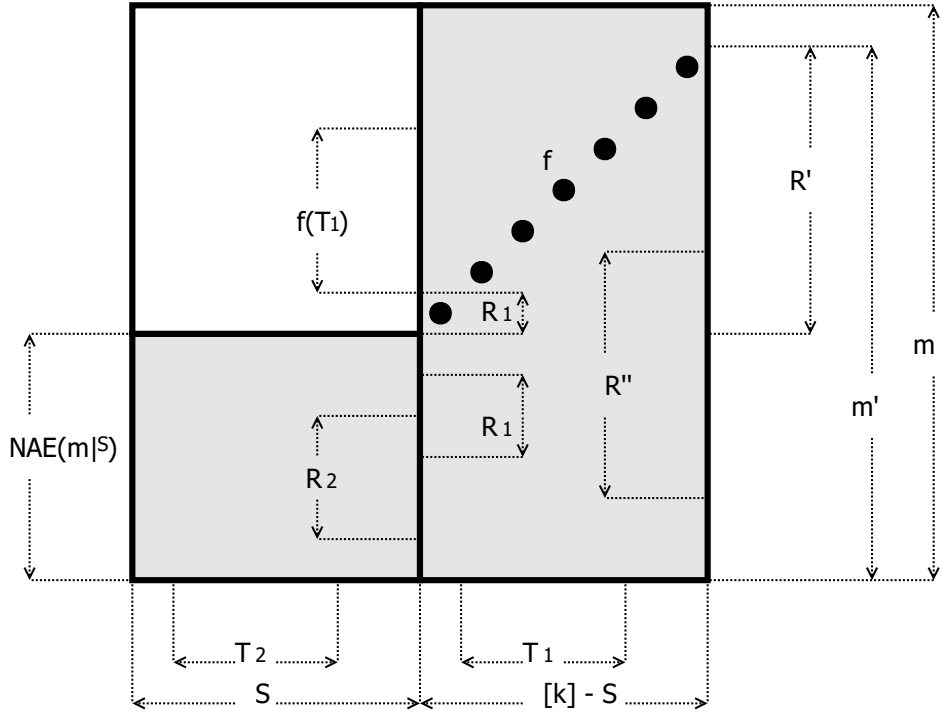}
\caption{Argument for Theorem~\ref{naethm}~\ref{parta}. Upper-left region is white. Entries $(t,f(t))$ are not white.} \label{minimalfig} \end{figure}

Due to the maximality of $|S|$, there is no 
white rectangle 
on $\ell$ columns and $n-|S|-\ell+1$ rows
inside $\mm|_{[n]-\nae(\mm|^S)}^{[k]-S}$ for any $\ell\geq 1$. That is to say, if we form a bipartite graph on right vertices corresponding to the columns $[k]-S$, and left vertices corresponding to the rows $[n]-\nae(\mm|^S)$, with non-white cells being edges, then any subset of the right vertices of size $\ell\geq 1$ has at least $\ell+1$ neighbors within the left vertices. 

By the induction on $k$ 
(since $S\neq\emptyset$), for the set of columns $[k]-S$ there is a set $R''$ of $k-|S|-1$ rows such that $\oep(\mm|_{R''}^{[k]-S})=-1$.
Together with the rows of $\nae(\mm|^S)$ this amounts to at most $k-2$ rows, so since $n\geq k$, we can find two rows outside this union; delete either one of them, leaving a  matrix $\mm'$ with $n-1$ rows. This matrix has the rows $\nae(\mm|^S)$ at the bottom, and $n-|S|$ remaining rows which we call $R'$. The lemma will follow by showing that $\oep(\mm')\geq -1$. 

In $\mm'$, the induced bipartite graph on right vertices $[k]-S$ and left vertices $R'$ has the property that any right subset of size $\ell\geq 1$ has a neighborhood of size at least $\ell$ in $R'$. Applying Hall's Marriage Theorem, there is an injective $f:[k]-S \to R'$ employing only edges of the graph.

Now consider any set of columns $T$, $T=T_1\cup T_2, T_1\subseteq [k]-S, T_2 \subseteq S$. We need to show that $\ep(\mm|^T)\geq -1$. Let $R_1=\nae(\mm|^{T_1})\cap R''$, $R_2=\nae(\mm|^{T_2})\subseteq \nae(\mm|^S)$, and note that
$|R_1|\geq |T_1|-1$, $|R_2| \geq |T_2|-1$. 
If $T_2=\emptyset$ we simply use $R_1$. 
Likewise if $T_1=\emptyset$, we use $R_2$. 

If both $T_1$ and $T_2$ are nonempty, $\nae(\mm|^{T_2})\subseteq \nae(\mm|^{S})$, and $|\nae(\mm|^{T_2})|\geq |T_2|-1$. 
Now use the matching $f$. The set of rows $f(T_1)$ lies in $R'$ and is therefore disjoint from
$\nae(\mm|^{T_2})$. 
Moreover since $T_2\neq \emptyset$, every entry $(t,j)$ for $t\in T_2, j \in R'$ is white. On the other hand due to the construction of $f$, for every $t\in T_1$ the entry $(t,f(t))$ is non-white. Therefore every row in $f(T_1)$ is in
$\nae(\mm|^{T_1\cup T_2})$. So $|\nae(\mm|^{T_1\cup T_2})| \geq |T_2|-1+|T_1|$. 
Thus $\oep(\mm')\geq -1$. \end{proof}

\section{From NAE to Rank: Proof of Theorem~\ref{naethm}~\ref{partb}} \label{sec:naepf}
Recall we are to show:~\ref{naethm}~\ref{partb}: \textit{$\hset(\mm)$ has full column rank if 
$\oep(\mm)\geq -1$.}

\begin{proof} The case $k=1$ is trivial.
Now suppose $k\geq 2$ and that Theorem~\ref{naethm}~\ref{partb} holds for all $k'<k$. 
Any constant rows of $\mm$ affect neither the hypothesis nor the conclusion, so remove them, leaving $\mm$ with at least $k-1$ rows. Now pick any set, $C$, of $k-1$ columns of $\mm$. By Theorem~\ref{naethm}~\ref{parta}
there are some $k-2$ rows of $\mm$, call them $R'$, on which 
$\oep(\mm|_{R'}^{C})=-1$. Let $v$ be a row of $\mm$ outside $R'$. Call the rows of $\mm$ apart from $v$, $R''$. 
Since $R''$ contains $R'$, 
by induction $\dim \rowspace \hset(\mm|_{R''}^{C}) = k-1$.
Therefore $U:=\rowspace \hset(\mm|_{R''}) \subseteq \rset^k$ is of dimension at least $k-1$. We claim now that $\dim U = k$.
 
Suppose to the contrary that $\dim U=k-1$. 
If $v_\odot(U) \subseteq U$ then as proven earlier in Theorem~\ref{invU}, $U$ respects $B(v)$. Since $v$ is nonconstant, $B(v)$ is a partition of $[k]$ into $\ell\geq 2$ nonempty blocks $B(v)_{(i)}$, and $U=\bigoplus_{i=1}^\ell U_{(i)}$ with $U_{(i)}=P_{(i)} U_{(i)}$. So there is some $i_0$ for which $U_{(i_0)}\subset V_{(i_0)}$; specifically, 
$U_{(i)} = V_{(i)}$ for all $i \neq i_0$, and $\dim U_{(i_0)}= \dim V_{(i_0)}-1$. 
 Since $|B(v)_{(i_0)}|<k$, we know by induction that the rows of $\hset(\mm)$ span $V_{(i_0)}$. Thus in fact $U=\rset^k$.
 (Further detail for the last step: let $w\in\rset^k$. Since the rows of $\hset(\mm)$ span $V_{(i_0)}$, there is a $w'\in \rowspace \hset(\mm)$ s.t.\ $P_{(i_0)}w'=P_{(i_0)}w$. Moreover since $U_{(i)} = V_{(i)}$ for all $i \neq i_0$, there is a $w'' \in U$ s.t.\ $w''=(I-P_{(i_0)})(w-w')$. Then $w'+w'' \in \rowspace \hset(\mm)$, and $w'+w''=w$.) \end{proof}

\section{Motivation} \label{sec:motivation}
Consider \textit{observable} random variables 
$X_1,\ldots,X_n$ that are statistically independent conditional on $H$, a 
 \textit{hidden} random variable $H$ supported on $\{1,\ldots,k\}$. (See causal diagram.)

\[ \entrymodifiers={++[o][F-]} \xymatrix{
*{} & *{} & \dc{H} \ar@{..>}[lld] \ar@{..>}[ld] \ar@{..>}[d] \ar@{..>}[rd] \ar@{..>}[rrd] \ar@{..>}[rrrd] &*{} & *{} & *{}  \\
X_1  & X_2  & \cdots & X_i  & \cdots  & X_{n}
} \label{Fig:MixProd} \]

The most fundamental case is that the $X_i$ are binary. Then
we denote $\mm_{i}^j= \Pr(X_i=1 | H=j)$. 
The model parameters are $\mm$ along with a probability distribution (the \textit{mixture} distribution) $\pi=(\pi_1,\ldots,\pi_k)$ on $H$.

Finite mixture models were
pioneered in the late 1800s in~\cite{Newcomb86,Pearson94}. The problem of learning such distributions has drawn a great deal of attention. For surveys see, e.g.,~\cite{Everitt1981,TSM85,lindsay1995mixture,McLachlanLR19}. For some algorithmic papers on discrete $X_i$, see~\cite{KMRRSS94,CGG01,FM99,CR08,FOS08,AHK12,RSS14,LRSS15,CM19,GMRS20b}. The source identification problem is that of computing $(\mm,\pi)$ from the joint statistics of the $X_i$. Put another way, the problem is to invert the multilinear moment map
\begin{align*} \mu:(\mm,\pi) & \to \rset^{2^{[n]}}\\ 
\mu(\mm,\pi)_S&= \Pr(X_S=1) \quad \text{ where  } S\subseteq [n], \; X_S=\prod\nolimits_{i\in S}X_i \\
&= \mm_S \cdot \pi^\top
\end{align*}
The last line shows the significance of $\hset(\mm)$ to mixture model identification, since $\mm_{S}^j=\Pr(X_S=1|H=j)$.

\paragraph{Connection to $\rank \hset(\mm)$.} 
In general $\mu$ is not injective (even allowing for permutation among the values of $H$). For instance it is clearly not injective if $\mm$ has two identical columns (unless $\pi$ places no weight on those). More generally, and assuming all $\pi_j>0$, it cannot be injective unless $\hset(\mm)$ has full column rank.

One sufficient condition for injectivity, due to~\cite{TahmasebiMM18}, is that there be $2k-1$ ``separated'' observables $X_i$; $X_i$ is separated if all $\mm_{i}^j$ are distinct, or in our terminology, if no color recurs in $\mm_i$. 
(Further~\cite{GMRS20b}, one can lower bound the distance between $\mu(\mm,\pi)$ and any $\mu(\mm',\pi')$ in terms of $\min_{i,j} |\mm_{i}^j-\mm_i^{j'}|$ and
the distance between $(\mm,\pi)$ and $(\mm',\pi')$.)

A weaker sufficient condition for injectivity of $\mu$, due to~\cite{GMRS20b},
 is that for every $i\in[n]$ there exist two disjoint sets $A,B\subseteq [n]-\{i\}$ such that 
$\hset(\mm|_A)$ and $\hset(\mm|_B)$ have full column rank.
(It is not known whether two disjoint such $A,B$ are strictly necessary, but the implied $n\leq 2k-1$ is in general best possible~\cite{RSS14}.)
\newpage
\bibliographystyle{plainurl}
\bibliography{refs}
\end{document}